\newcommand{\ignore}[1]{}
\newcommand\eqdef{\stackrel{\mathclap{\normalfont\mbox{\normalfont\tiny def}}}{=}}
\newtheorem{proposition}{Proposition}
\newtheorem{observation}{Observation}
\title{Regularized Wasserstein Means for Aligning Distributional Data}
\author{
\Large \textbf{Liang Mi, Wen Zhang, and Yalin Wang}\\
Arizona State University\\
\{liangmi, wzhan139, ylwang\}@asu.edu % email address must be in roman text type, not monospace or sans serif
}
\begin{document}
\maketitle

\begin{abstract}
We propose to align distributional data from the perspective of Wasserstein means. We raise the problem of regularizing Wasserstein means and propose several terms tailored to tackle different problems. Our formulation is based on the variational transportation to distribute a sparse discrete measure into the target domain. The resulting sparse representation well captures the desired property of the domain while reducing the mapping cost. We demonstrate the scalability and robustness of our method with examples in domain adaptation, point set registration, and skeleton layout.
\end{abstract}

\section{Introduction}

Aligning distributional data is fundamental to many problems in machine learning. From the early work on histogram manipulation, e.g.~\cite{stark2000adaptive}, to the recent work on generative modeling, e.g.~\cite{beecks2011modeling}, researchers have proposed various alignment techniques that benefit numerous fields including domain adaptation, e.g.~\cite{sun2016deep}, and shape registration, e.g.~\cite{ma2016non}. A universal approach to aligning distributional data is through optimizing an objective function that measures the loss of the map between them. Regarding one distribution as the fixed target and the other the source, the alignment process in general follows an iterative manner where we alternatively update their correspondence and transform the source. When the source has much fewer samples or in a lower dimension, the process is essentially finding a sparse representation~\cite{bengio2013representation}.

\ignore{New problem are defined, new theories are discovered, and new algorithms are proposed.}

The optimal transportation (OT) loss, or the Wasserstein distance, has proved itself to be superiors in many aspects over several other distances~\cite{gibbs2002choosing,arjovsky2017wasserstein}, benefiting various learning algorithms. By regarding the Wasserstein distance as a metric, researchers have been able to compute a sparse \textit{mean}~\cite{ho2017multilevel} of a distribution, which is a special case of the \textit{Wasserstein barycenter} problem~\cite{agueh2011barycenters} when there is only one target distribution. While OT algorithms find the correspondence between the distributions, updating the mean can simply follow the rule that each source sample is mapped to the weighted mean of its corresponding target sample(s)~\cite{ye2017fast}.

In this paper, we raise the problem of regularizing the \textit{Wasserstein means}. In addition to finding a mean that yields the minimum transportation cost, in many cases we also want to insert certain properties so that it satisfies other criteria. A common technique is adding regularization terms to the objective function. While most of the existing work, e.g.~\cite{cuturi2013sinkhorn,courty2017optimal}, focus on regularizing the optimal transportation itself, we address the mean update rule and show the benefit from regularizing it. We introduce a new framework to compute OT-based sparse representation with regularization. We base our method on variational transportation~\cite{mi2018variational} which produces a map between the source and the target distributions in a many-to-one fashion. Different from directly mapping the source into the weighted average of its correspondence~\cite{ye2017fast,courty2017optimal,mi2018variational}, we propose to regularize the mapping to cope with specific problems -- domain adaptation, point set registration, and skeleton layout. The resulting mean, or centroid, can well represent the key property of the distribution while maintaining a small reconstruction error. Code is available at \url{https://github.com/icemiliang/pyvot}

\section{Related Work} \label{sec:rw}
\subsection{Optimal Transportation}\label{sec:rw_rot}
The optimal transportation (OT) problem was raised introduced by Monge~\cite{monge1781memoire} in the 18th century, which sought a mass-preserving transportation map between distributional data with the minimum cost. It resurfaced in 1940s when Kantorovich~\cite{kantorovich1942translocation} introduced a relaxed version where mass \textit{can} be split and provided the classic linear programming solution. A breakthrough for the mass-preserving, or non-mass splitting, OT happened in the early 1990s when Brenier~\cite{brenier1991polar} proved its existence under quadratic Euclidean cost. In more recent years, fast algorithms for computing, or approximating, OT have been proposed in both lines of research -- non-mass-preserving, e.g.~\cite{rabin2011wasserstein,cuturi2013sinkhorn,solomon2015convolutional} and mass-preserving, e.g.~\cite{merigot2011multiscale,levy2015numerical,kolouri2016sliced,chen2019gradual}.

We follow Monge's mass-preserving formulation. Specifically, we adopt~\cite{mi2018variational} with improvements to compute the OT because it gives us a clear path of each sample, not a spread-out map. Thus, we can directly regularize the support instead of the mapping.

%%%%%%%%%%%%%%%%%%%% WB WM %%%%%%%%%%%%%%%%%%%%%%%
\subsection{Wasserstein Barycenters and Means}
The Wasserstein distance is the minimum cost induced by OT. In most cases, the cost itself may not be as desired as the map, but it satisfies all metric axioms~\cite{villani2003topics} and thus often serves as the loss for matching distributions, e.g.~\cite{ling2007efficient,arjovsky2017wasserstein}. Moreover, given multiple distributions, one can find their weighted average with respect to the Wasserstein metric. This problem was studied in~\cite{mccann1997convexity,ambrosio2008gradient} for averaging two distributions and generalized to multiple distributions in~\cite{agueh2011barycenters}, which coins the \textit{Wasserstein barycenter} term.

A special case of the barycenter problem is when there is only one distribution and we want to find its sparse discrete barycenter. Because computationally it is equivalent to the \textit{k-means} problem, \cite{ho2017multilevel} defines it as the \textit{Wasserstein means} problem. Before that, Cuturi and Doucet had discussed it in~\cite{cuturi2014fast} along with the connection of their algorithm to Lloyd's algorithm in that case. \cite{mi2018variational} proposes an OT-based clustering method which is very close to the Wasserstein means problem. \cite{kolouri2018sliced} also made a contribution by discussing the \textit{sliced Wasserstein Means} problem.

Our work focuses on \textit{regularizing} the Wasserstein means. We obtain the mean by mapping the sparse points into the target domain according to the OT correspondence. We insert regularization into the mapping process so that the sparse points not only have a small OT loss but they also have certain properties induced by the regularization terms. 

Our work should not be confused with other work on regularizing OT. For example, \cite{cuturi2013sinkhorn} introduces entropy-regularized OT where the entropy term controls the sparsity of the map and it was later used in~\cite{cuturi2014fast} to compute Wasserstein barycenters. \cite{courty2017optimal} also leveraged class labels to regularize OT for domain adaptation. \cite{ferradans2014regularized} proposed Sobolev norm-based regularized OT and further regularized barycenter and yet the regularization is still added to the OT, not the barycenter. These works only regularize OT and then directly update the support simply to the average of its correspondence. In this paper, we regularize the update.

%%%%%%%%%%%%%%%%%%%%%%%%%%%%%%%%%%%%%%%%%%%%%%%%%%%%%%%%%%%%
%%%%%%%%%%%%%%%%%%%%% Preliminaries %%%%%%%%%%%%%%%%%%%%%%%%
%%%%%%%%%%%%%%%%%%%%%%%%%%%%%%%%%%%%%%%%%%%%%%%%%%%%%%%%%%%%

\section{Preliminaries}\label{sec:p}
We begin with some basics on optimal transportation (OT). Suppose $M$ is a compact metric space, $\mathcal{P}(M)$ is the space of all Borel probability measures on $M$ and $\mu, \nu \in \mathcal{P}(M)$ are two such measures. A measure in the product space, $\pi(\cdot,\cdot) \in \mathcal{P}(M \times M)$, serves as a mapping between any two measures on $M$, i.e. $\pi\colon M\rightarrow M$. We define the cost function of the mapping as the geodesic distance $c(\cdot,\cdot)\colon M \times M \rightarrow \mathbb{R}^{+}$.

%%%%%%%%%%%%%%%%%%%% OT %%%%%%%%%%%%%%%%%
\subsection{Optimal Transportation}\label{sec:p_ot}
For a mapping $\pi(\mu,\nu)$ to be legitimate, the push-forward measure of one measure has to be the other one, i.e. $\pi_\#\mu = \nu$. Thus, for any measurable subsets $B,B' \subset M$ we have $\pi(B \times M) = \mu(B)$ and $\pi(M \times B') = \nu(B')$. We denote the space of all legitimate product measures by $\Pi(\mu,\nu) = \{\pi \in \mathcal{P}(M \times M)\ |\ \pi(\cdot,M) = \mu,\ \pi(M,\cdot) = \nu \}$.

Optimal transportation seeks a solution $\pi \in \Pi(\mu,\nu)$ that produces the minimum total cost:
\begin{equation}
\label{eq:wd}
  W_p(\mu,\nu) \eqdef \bigg(\underset{\pi \in \Pi(\mu,\nu)}{\inf}  \int_{M\times M}^{} (c(x,y))^pd\pi(x,y)\bigg)^{\frac{1}{p}},
\end{equation}
\noindent
where $p$ indicates the finite moment of the cost function. The minimum cost is the \textit{p-Wasserstein distance}. In this paper, we only consider the \textit{2}-Wasserstein distance, $W_2$.

Monge's formulation restricts OT to preserve measures, that is, mass cannot be split during the mapping. Letting $T$ denote such a mapping, $T\colon x \rightarrow y$, we have $d\pi(x,y) \equiv d\mu(x) \delta(y-T(x))$. Therefore, we formally define $T$ as
\begin{equation}
  T_{opt} = \underset{T}{\arg\min}  \int_{M}^{} {c(x,T(x))}^{p}d\mu(x).
  \label{eq:ot_monge}
\end{equation}

\noindent 
In this paper, we follow (\ref{eq:ot_monge}). The details of the optimal transportation problem and the properties of the Wasserstein distance can be found in~\cite{villani2003topics,gibbs2002choosing}. With the abuse of notation, we use $\pi(\mu,\nu)$ to denote the Monge's OT map between $\mu$ and $\nu$ and since the map is applied to their supports $x$ and $y$ we also use $\pi\colon x\rightarrow y$ and $y=\pi(x)$ to denote the map.

%%%%%%%%%%%%%%%%%% VOT %%%%%%%%%%%%%%%%%%%%
\subsection{Variational Optimal Transportation}\label{sec:p_vot}
Suppose $\mu$ is continuous and $\nu$ is a set of Dirac measures in $M = \mathbb{R}^n$, supported on $\Omega_{\mu} = \{ x \in M\ |\ \mu(x)>0 \}$ and $\Omega_{\nu} = \{y_j \in M \ |\ \nu_j > 0\}, j = 1,...,k$, and their total measure equal: $vol(\Omega) = \int_{\Omega}d\mu(x) = \sum_{j=1}^{k}\nu_j$. \cite{gu2013variational} proposed a variational solution to this \textit{semi-discrete OT} on $\mathbb{R}^n$. It starts from a vector $\bm{h}=(h_1,...,h_k)^T$ and a piece-wise linear function: $\theta_{\bm{h}}(x) = \max\{\langle x,y_j\rangle+h_j\},\ j = 1,...,k$. Alexandrov proved in~\cite{alexandrov2005convex} that there exists a unique $\bm{h}$ that satisfies the following
\begin{equation}
\label{eq:alex}
    vol(x \in \Omega \mid \nabla \theta_{\bm{h}}(x)=y_j) = \nu_j.
\end{equation}
\noindent Furthermore, Brenier proved in~\cite{brenier1991polar} that $\nabla \theta_{\bm{h}}\colon x \rightarrow y$ \textit{is} the Monge's OT-Map if the transportation cost is the quadratic Euclidean distance $\| x - \nabla\theta_{\bm{h}}(x)\|_2^2$ .

Suppose $S_j(h) = \{x \in M\ |\ \nabla\theta_{\bm{h}}(x)=y_j\}$ is the projection of $\theta_{\bm{h}}$ on $\Omega$. Variational OT (VOT) solves
\begin{equation}
\label{eq:vot_energy}
\begin{split}
    E(\bm{h}) \eqdef & \int_{\Omega}^{}\nabla\theta_{\bm{h}}d\mu -\sum_{j=1}^{k}\nu_j h_j\\
    \equiv & \int_{\bm{0}}^{\bm{h}}\bigg(\sum_{j=1}^{k}\int_{\Omega\cap S_j(h)}d\mu\bigg) dh -\sum_{j=1}^{k}\nu_j h_j,
\end{split}
\end{equation} 
\noindent and thus converts the OT problem into searching in a vector space $\mathcal{H} = \{h \in \mathbb{R}^k\ |\ \int_{\Omega \cap S_j(h)}d\mu >0\ \text{for all}\ j \}$. Proved in~\cite{gu2013variational}, $E$ (\ref{eq:vot_energy}) is convex in $\mathcal{H}$ when $\sum_{j=1}^{k}h_j = 0$. The gradient of (\ref{eq:vot_energy}) is (\ref{eq:alex}). Thus, minimizing (\ref{eq:vot_energy}) when its gradient approaches $\bm{0}$ will give us the desired $\bm{h}$, and the map $\nabla\theta_{\bm{h}}$.

%%%%%%%%%%%%%%%%%%%%%%%% WB %%%%%%%%%%%%%%%%%%%%%%%
\subsection{Wasserstein Barycenters}\label{sec:p_wm}
Given a collection of measures and weights $\{\mu_i,\lambda_i\}_{i=1}^{N}$, there exists such a measure $\nu$ that the weighted average of the Wasserstein distances between $\nu$ and all $\mu_i$'s reaches the minimum. As exposed in~\cite{agueh2011barycenters}, Agueh and Carlier defined such a problem as finding a barycenter in the measure space with respect to the Wasserstein distance:
\begin{equation}\label{eq:wb}
    \nu = \underset{\nu \in \mathcal{P}_2(M)}{\arg\min} \sum_{i=1}^{N}\lambda_iW_2^2(\nu,\mu_i). \nonumber
\end{equation}

Wasserstein barycenters of discrete measures exist for mass splitting OT but may not for non-mass splitting or measure-preserving OT. Yet, proved in \cite{anderes2016discrete}, when the weights are uniform and all measures have finite number of supports, there still exists a barycenter $\nu$ that preserves the measure and whose number of supports $|\Omega_{\nu}|$ has a tight upper bound $|\Omega_{\nu}| \leq \sum_{i=1}^{N}|\Omega_{\mu_i}| - N + 1$, and the OT from every $\mu_i$ to $\nu$ preserves the measure.

% \ignore{
% When $\nu$ is a discrete measure and we substitute $\mu_i$'s with their weighted average $\mu = \sum_{i=1}^{n}\lambda_i\mu_i$, (\ref{eq:wb}) reduces to a \textit{Wasserstein means problem}~\cite{ho2017multilevel}:
% \begin{equation}\label{eq:wm}
%     \nu = \underset{\nu \in \mathcal{P}_2(M)}{\arg\min} W_2^2(\nu,\mu).
% \end{equation}

% \noindent This coincides with the \textit{k-means problem}~\cite{graf2007foundations} under the Wasserstein metric. Furthermore, as pointed by Cuturi and Doucet in~\cite{cuturi2014fast}, if the weight, or capacity, of $\nu$'s each atom is fixed, then this becomes a \textit{constrained k-means problem}. Intuitively, the total weight transported to an atom, or centroid, has to agree with its fixed capacity. This constraint brings fast computation of Wasserstein barycenters as it avoids differentiating and updating the capacity. On the other hand, we can also fix the support of $\nu$ and differentiate the weight of each atom.
% }

%%%%%%%%%%%%%%%%%%%%%%%%%%%%%%%%%%%%%%%%%%%%%%%%%%%%%%%%%%%%
%%%%%%%%%%%%%%%%%%%%% Means %%%%%%%%%%%%%%%%%%%%%%%%
%%%%%%%%%%%%%%%%%%%%%%%%%%%%%%%%%%%%%%%%%%%%%%%%%%%%%%%%%%%%

\begin{algorithm}[!b]
\DontPrintSemicolon
\SetKwInOut{Input}{Input}\SetKwInOut{Output}{Output}
\Input{$\mu(x) \in \mathcal{P}(M)$ and Dirac measures  $\{\nu_j, y_j\}$}
    t = 0. \;
    \Repeat{{\normalfont convergence.}}{
        $\nu^{(t+1)} \leftarrow $ Update weight according to (\ref{eq:mw_nu}). \;
        $\pi^{(t+1)} \leftarrow $ Compute OT with fixed $y^{(t)},\nu^{(t)}$. \;
        $y^{(t+1)} \leftarrow $ Update support according to (\ref{eq:wm_y}). \;
        $t \leftarrow t + 1$.
    }
\KwRet{$\pi, y, \nu$}.\;
\caption{Wasserstein Means}
\label{alg:wm}
\end{algorithm}

\section{Wasserstein Means via Variational OT} \label{sec:mw}
A special case of the Wasserstein barycenters problem is when $N = 1$. In that case, we are computing a barycenter of a single probability measure. We call it the \textit{Wasserstein mean} (WM). Beyond a special case, the barycenters and the means have the following connection.
\begin{proposition}
Given a compact metric space $M$, a transportation cost $c(\cdot,\cdot)\colon M \times M \rightarrow \mathbb{R}^{+}$, and a collection of Borel probability measures $\mu_i \in \mathcal{P}(M)$, with weights $\lambda_i,\ i = 1,...,N$, the Wasserstein mean $\nu_m$ of their average measure induces a lower bound of the average Wasserstein distance from the barycenter $\nu_b$ to them, provided that $|\Omega_{\nu_b}| \leq |\Omega_{\nu_m}| \leq k$ for some finite $k$.
\end{proposition}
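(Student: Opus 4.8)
The plan is to produce an explicit bound of the form $\sum_{i=1}^{N}\lambda_i W_2^2(\nu_b,\mu_i)\ \geq\ W_2^2(\nu_m,\bar\mu)$, where $\bar\mu\eqdef\sum_{i=1}^{N}\lambda_i\mu_i$ is the average measure (the $\lambda_i$ being convex weights, $\sum_i\lambda_i=1$) and $\nu_m$ is its Wasserstein mean, so that the minimal reconstruction cost attained by $\nu_m$ is precisely the advertised lower bound on the averaged transport cost of the barycenter $\nu_b$.

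First I would establish the \emph{pooling inequality} $W_2^2(\nu_b,\bar\mu)\leq\sum_{i=1}^{N}\lambda_i W_2^2(\nu_b,\mu_i)$. For each $i$, pick a coupling $\pi_i\in\Pi(\mu_i,\nu_b)$ that realizes $W_2^2(\mu_i,\nu_b)$ in~(\ref{eq:wd}) and form the mixture $\bar\pi\eqdef\sum_{i=1}^{N}\lambda_i\pi_i$. Its first marginal is $\sum_i\lambda_i\mu_i=\bar\mu$ and its second marginal is $\sum_i\lambda_i\nu_b=\nu_b$, so $\bar\pi\in\Pi(\bar\mu,\nu_b)$ and hence $W_2^2(\bar\mu,\nu_b)\leq\int_{M\times M}(c(x,y))^2\,d\bar\pi(x,y)=\sum_{i=1}^{N}\lambda_i W_2^2(\mu_i,\nu_b)$. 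This is just the joint convexity of $(\mu,\nu)\mapsto W_2^2(\mu,\nu)$, immediate from~(\ref{eq:wd}) since the feasible set $\Pi(\mu,\nu)$ depends affinely on the marginals and the objective is linear in $\pi$.

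Second I would use the optimality of $\nu_m$. By hypothesis $|\Omega_{\nu_b}|\leq|\Omega_{\nu_m}|\leq k$, so $\nu_b$ is an admissible competitor in the minimization defining the Wasserstein mean of $\bar\mu$ (if that problem fixes the support cardinality, pad $\Omega_{\nu_b}$ with zero-mass atoms up to $|\Omega_{\nu_m}|$ points). Hence $W_2^2(\nu_m,\bar\mu)\leq W_2^2(\nu_b,\bar\mu)$, and chaining with the pooling inequality gives $W_2^2(\nu_m,\bar\mu)\leq W_2^2(\nu_b,\bar\mu)\leq\sum_{i=1}^{N}\lambda_i W_2^2(\nu_b,\mu_i)$, which is the claim. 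Tracking when the two inequalities are tight shows moreover that equality forces $\nu_b$ to be itself a Wasserstein mean of $\bar\mu$ and $\bar\pi$ to be an optimal coupling, worth recording as a remark.

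The step I expect to be the main obstacle is the marginal bookkeeping in the pooling inequality under the non-mass-splitting (Monge) formulation~(\ref{eq:ot_monge}) that the paper adopts: a convex combination of transport \emph{maps} need not be a map, so the argument must genuinely be run with Kantorovich couplings and then transferred back using the fact that $W_2$ takes the same value under both formulations (Brenier, when $\mu$ is absolutely continuous; for general Borel $\mu_i$ one simply reads $W_2$ directly off~(\ref{eq:wd})). A secondary subtlety is making precise what feasibility in the Wasserstein-means problem means --- whether the support cardinality is fixed at, or merely bounded by, $k$ --- which is exactly what the hypothesis $|\Omega_{\nu_b}|\leq|\Omega_{\nu_m}|\leq k$ is there to handle; compactness of $M$ ensures the infima defining $\nu_b$ and $\nu_m$ are attained, so the statement is not vacuous.
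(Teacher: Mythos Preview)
Your proposal is correct and follows essentially the same two-step structure as the paper's own proof: first bound $\sum_i\lambda_i W_2^2(\nu_b,\mu_i)$ from below by $W_2^2(\nu_b,\bar\mu)$, then invoke the optimality of $\nu_m$ among admissible sparse competitors to pass to $W_2^2(\nu_m,\bar\mu)$. The paper dispatches the first step by citing Jensen's inequality and the convexity of $W_2^2(\nu_b,\cdot)$ ``for its metric property,'' whereas you prove that convexity directly by forming the mixture coupling $\bar\pi=\sum_i\lambda_i\pi_i$; your argument is the more honest one, since convexity of $W_2^2$ does not follow from the metric axioms alone but rather from exactly the affine dependence of $\Pi(\mu,\nu)$ on its marginals that you spell out. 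Your remarks on the Monge--Kantorovich distinction and on the role of the cardinality hypothesis are also more careful than the paper's treatment, which takes both points for granted.
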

\begin{proof}
Since $W_2^2(\nu_b,\cdot)$ is convex for its metric property, according to Jensen's inequality, we have
\begin{equation}
    W_2^2(\nu_b,\sum_{i=1}^{N}\lambda_i \mu_i) \leq \sum_{i=1}^{N} \lambda_i  W_2^2(\nu_b,\mu_i). \nonumber
\end{equation}
\noindent Then, according to Wasserstein mean's definition, 
\begin{equation}
    W_2^2(\nu_m,\sum_{i=1}^{N}\lambda_i \mu_i) \leq  W_2^2(\nu_b,\sum_{i=1}^{N}\lambda_i \mu_i),\ \forall \nu_b. \nonumber 
\end{equation}
\noindent The result shows. The equal sign holds when $N = 1$. 
\end{proof}
\noindent We should point out that if $\{\mu_i\}$ are discrete measures, then for the barycenter to exist we need to add the condition from~\cite{anderes2016discrete} that $|\Omega_{\nu_b}| \leq \sum_{i=1}^{N}|\Omega_{\mu_i}| - N + 1$, which also bounds $|\Omega_{\nu_m}|$ through $|\Omega_{\nu_m}| \leq \sum_{i=1}^{N}|\Omega_{\mu_i}|$.

Now, approaching Wasserstein means is essentially through optimizing the following objective function:
\begin{equation} \label{eq:wm_2}
\begin{split}
   \min f(\pi,y,\nu) \eqdef & \underset{\pi, y_j, \nu_j}{\min} \sum_{j=1}^{k} \sum_{y_j = \pi(x)} \mu(x) \|y_j - x\|_2^2, \\
&    \text{s.t.}\ \  \nu_j = \sum_{y_j=\pi(x)}\mu(x).
\end{split}
\end{equation}
\noindent Compared to OT, solving WM w.r.t. (\ref{eq:wm_2}) introduces 2 additional parameters -- measure $\nu$ and its support $y$. When $y$ and $\nu$ are fixed, (\ref{eq:wm_2}) becomes a classic optimal transportation problem and we adopt variational optimal transportation (VOT)~\cite{mi2018variational} to solve it. Thus, (\ref{eq:wm_2}) is minimizing the lower bound of the OT cost.

Then, it boils down to solving for $y$ and $\nu$. Certainly (\ref{eq:wm_2}) is differentiable at all $y \in \mathbb{R}^{n \times k}$ and is convex. It's optimum w.r.t. $y$ can be achieved at 
\begin{equation} 
\label{eq:wm_y}
    \tilde{y_j} = \frac{\int_{\Omega_{\mu} \cap S_j} xd\mu(x)}{\int_{\Omega_{\mu} \cap S_j} d\mu(x)}.
\end{equation}
\noindent It is essentially to update the mean to the centroid of corresponding measures, adopted in for example~\cite{cuturi2014fast,ye2017fast,courty2017optimal}. The slight difference in our method is that VOT is non-mass splitting and thus the centroid in our case has a clear position without the need for weighting.
\ignore{\noindent For convenience, we use a function symbol $f$ to indicate the updating process. We write $f\colon y \rightarrow \hat{y}$, or using the iterator notation $t$, $f\colon y^{(t)} \rightarrow y^{(t+1)}$. Thus, the critical point of (\ref{eq:wm_2}) w.r.t. $y$ can be reached by $f\colon y \rightarrow \tilde{y}$.}

As discussed in \cite{cuturi2014fast}, (\ref{eq:wm_2}) is not differentiable w.r.t. $\nu$. However, we can still get its optimum through the following observation.

\begin{observation}
The critical point of the function $\nu \rightarrow f(\pi,\nu)$ is where $\nu$ induces $\pi$ being the gradient map of the unweighted Voronoi diagram formed by $\nu$'s support $y$. In that case, every empirical sample $\mu(x)$ at $x$ is mapped to its nearest $y_j$, which coincides with Lloyd's algorithm.
\end{observation}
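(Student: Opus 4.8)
\ The plan is to sidestep the non-differentiability of $\nu\mapsto f(\pi,\nu)$ by reducing it to a pointwise minimization that is transparently solved by the nearest-neighbour assignment, and then to read off the corresponding heights $\bm{h}$ in the VOT parametrization. First I would note that, with $y$ held fixed, the side constraint $\nu_j=\sum_{y_j=\pi(x)}\mu(x)$ in (\ref{eq:wm_2}) merely records the push-forward of $\pi$; hence an admissible pair $(\pi,\nu)$ is exactly a measurable map $\pi\colon\Omega_\mu\to\{y_1,\dots,y_k\}$ with source marginal $\mu$, together with $\nu=\pi_\#\mu$. For a \emph{fixed} $\nu$ the inner optimum $\min_\pi f(\pi,\nu)$ is the Monge transport cost, which by Brenier equals $W_2^2\big(\mu,\sum_j\nu_j\delta_{y_j}\big)$ and is attained by the VOT map $\nabla\theta_{\bm{h}}$ with $vol(S_j(\bm{h}))=\nu_j$. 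Minimizing also over $\nu$ in the simplex $\{\nu\ge 0,\ \sum_j\nu_j=vol(\Omega)\}$ and exchanging the two infima gives
\begin{equation}
\min_{\nu}\ \min_{\pi}\ f(\pi,\nu)\ =\ \min_{\pi\,\colon\,\Omega_\mu\to\{y_j\}}\ \int_{\Omega_\mu}\|x-\pi(x)\|_2^2\,d\mu(x), \nonumber
\end{equation}
with $\nu$ then determined as $\pi_\#\mu$; the exchange is legitimate because, as $\nu$ ranges over the simplex, the families of admissible maps with target marginal $\nu$ exhaust all measurable $\pi\colon\Omega_\mu\to\{y_j\}$.

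Next I would solve the right-hand problem pointwise: since the integrand depends on $\pi$ only through the value $\pi(x)\in\{y_1,\dots,y_k\}$, it is minimized for $\mu$-a.e.\ $x$ by $\pi^\star(x)=y_{j(x)}$ with $j(x)\in\arg\min_j\|x-y_j\|_2$. Hence the nearest-neighbour assignment $\pi^\star$ is the global minimizer of $f$ over all admissible $(\pi,\nu)$, and its cells $\pi^{\star-1}(y_j)$ coincide, up to the $\mu$-negligible union of bisecting hyperplanes (negligible since $\mu$ is absolutely continuous), with the unweighted Voronoi cells $V_j=\{x:\|x-y_j\|_2\le\|x-y_i\|_2\ \forall i\}$. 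Therefore the optimal weight vector is $\nu^\star_j=\mu(V_j\cap\Omega_\mu)$, i.e.\ every sample at $x$ is transported to its nearest $y_j$, which is precisely the assignment step of Lloyd's algorithm.

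It remains to identify $\nu^\star$ with the $\nu$ named in the statement, namely the one whose induced OT map is the gradient map of the \emph{unweighted} Voronoi diagram. Completing the square in $\theta_{\bm{h}}(x)=\max_j\{\langle x,y_j\rangle+h_j\}$ shows $S_j(\bm{h})=\{x:\|x-y_j\|_2^2-w_j\le\|x-y_i\|_2^2-w_i\ \forall i\}$ with $w_j=2h_j+\|y_j\|_2^2$, i.e.\ $S_j(\bm{h})$ is a power (Laguerre) cell; these reduce to the $V_j$ exactly when all $w_j$ are equal, equivalently $h_j=-\tfrac12\|y_j\|_2^2+\text{const}$, in which case $\theta_{\bm{h}}(x)=\tfrac12\|x\|_2^2-\tfrac12\min_j\|x-y_j\|_2^2$ and $\nabla\theta_{\bm{h}}(x)=y_{j(x)}$. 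Plugging this $\bm{h}$ into (\ref{eq:alex}) yields $\nu_j=vol(V_j\cap\Omega_\mu)=\nu^\star_j$, which closes the loop: the critical $\nu$ is the one inducing $\pi=\nabla\theta_{\bm{h}}$ on the unweighted Voronoi diagram, as claimed.

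Finally, to make "critical point" meaningful despite the non-differentiability recalled from~\cite{cuturi2014fast}, I would observe that $\nu\mapsto\min_\pi f(\pi,\nu)=W_2^2(\mu,\sum_j\nu_j\delta_{y_j})$ is convex in $\nu$ — a supremum of affine functions via semi-discrete Kantorovich duality — so the global minimizer $\nu^\star$ is the (essentially unique) stationary point on the simplex in the KKT sense, and the KKT conditions force the dual potentials $\psi_j$ to be equal on $\mathrm{supp}(\nu^\star)$, which is again the equal-$w_j$, hence unweighted-Voronoi, condition. The main obstacle is precisely this non-smoothness: one cannot simply zero a gradient, so the argument hinges on the min-over-$\pi$ swap above together with the measure-theoretic remark that Voronoi bisectors are $\mu$-null, which is what makes $\pi^\star$ — and thus $\nu^\star$ — well defined in the first place.
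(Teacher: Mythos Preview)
Your proposal is correct and shares the paper's core argument: the nearest-neighbour assignment minimizes the transport cost pointwise, so any other admissible $(\pi',\nu')$ yields an equal or larger integral. The paper's own proof is a single sentence to exactly this effect; your exchange-of-infima framing, the explicit Laguerre-to-Voronoi reduction via $h_j=-\tfrac12\|y_j\|_2^2+\text{const}$, the null-set remark for ties, and the convexity/KKT justification of the word ``critical'' are all sound elaborations that the paper leaves implicit, but they do not constitute a different route.
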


\begin{proof}
Suppose $\nu$ induces the OT map $\pi$ from every $x$ to its nearest $y_j$. Then, the map $\pi'\colon x \rightarrow y_{j'}$ that satisfies any other $\nu'=\int_{\Omega \cap S_{j'}}d\mu(x)$ will yield an equal or larger cost $\int_{\Omega}\|y_j-x_i\|_2^2 d\mu(x_i) \leq \int_{\Omega}\|y_{j'}-x_i\|_2^2 d\mu(x_i)$.
\end{proof}
\noindent Thus, we can write the update rule for $\nu$ as 
\begin{equation}
\label{eq:mw_nu}
\begin{gathered}
    \tilde{\nu}(y_j) = \int_{\Omega \cap S_j}d\mu(x),\\
    \text{s.t.}\ S_j = \{x \in M\ |\ \|x-y_j\|_2 \leq \|x-y_i\|_2, i \neq j  \}.
\end{gathered}
\end{equation}

Updating the three parameters $\pi$, $y$, and $\nu$ can follow the \textit{block coordinate descent} method. Since at each iteration we have closed-form solutions in the $y$ and $\nu$ directions, there is no need to do a line search there. We wrap up our algorithm for computing the Wasserstein means in Alg.~\ref{alg:wm}

As discussed in~\cite{cuturi2014fast}, when $N=1$ and $p=2$, computing the Wasserstein barycenter (in this case the Wasserstein mean) is equivalent to Lloyd's k-means algorithm. The difference also occurs when we have a constraint on the weight $\nu_j(y)$. Ng~\cite{ng2000note} considered a uniform weight for all $S_j$. Our algorithm can adapt to any constraint on $\nu_j \geq 0$. In this case, our algorithm is equivalent to~\cite{cuturi2014fast} where the update of the support is equivalent to re-centering it by our (\ref{eq:wm_y}).

\begin{figure}[!t]    
  \centering
    \includegraphics[width=0.8\linewidth]{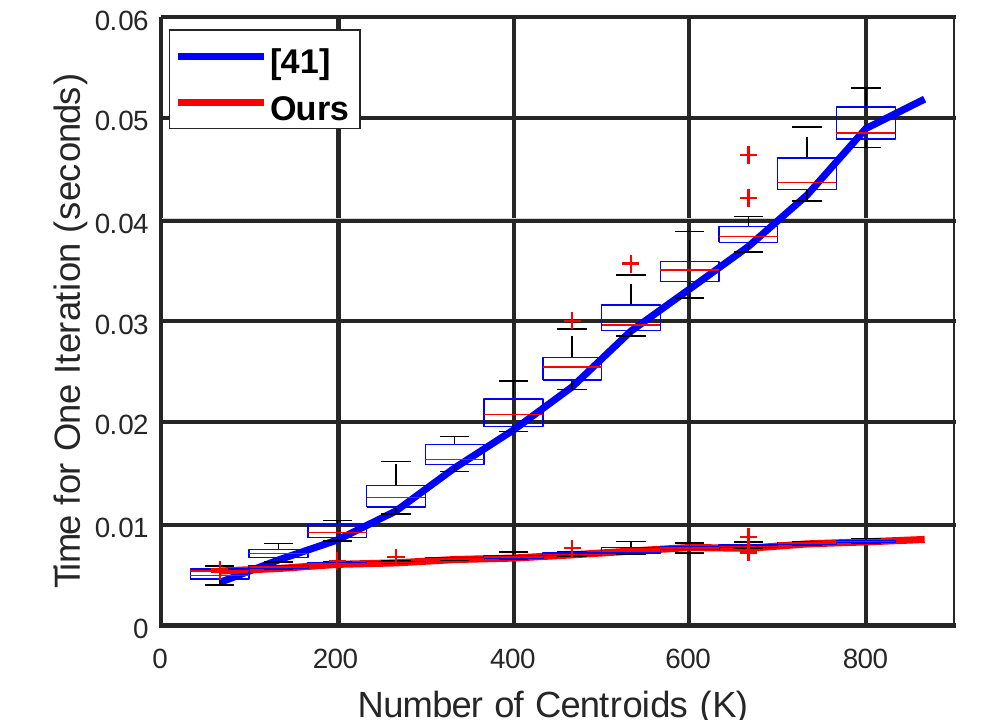}
    \caption{{\small Comparison of time over number of centroids.}}
    \label{fig:time}
\end{figure}

\textbf{Complexity} In practice, we use the total mass of the discrete measures inside each $S$. Then, we vectorize the computation with PyTorch because parameters in VOT, $\bm{h}$ (\ref{eq:vot_energy}), can be optimized individually and thus parallelly. Given $N$ empirical samples and $K$ centroids, our implementation of OT runs $\mathcal{O}(KN)$ on CPU and theoretically $\mathcal{O}(N)$ on GPU. Figure~\ref{fig:time} shows timing over $K \leftarrow [20:1000]$. $N=10,000$. The boxes along the plots come from 10 runs of 300 iterations for each $K$. The dimension of the data is 3. $y$ axis is in seconds per iteration. The plot shows the increased $K$ add few burden to RWM. The complexity added by regularization is as follows. The complexity in 5.1 is $\mathcal{O}(K)$; 5.3 is $\mathcal{O}(K^3)$ mainly from solving SVD, but in practice we choose a small or a constant number $K' << K$ for SVD; 5.4 is $\mathcal{O}(K)$ for computing curvature. Thus, the total computational complexity of RWM is $\mathcal{O}(N) + \mathcal{O}(K^3)$, depending on the regularization term. We also compute the pair-wise distances between empirical samples and centroids beforehand as in~\cite{cuturi2013sinkhorn}, making the memory consumption on the level of $\mathcal{O}(KN)$.

%%%%%%%%%%%%%%%%%%%%%%%%%%%%%%%%%%%%%%%%%%%%%%%%%%%%%%%%%%%%
%%%%%%%%%%%%%%%%%%%%% Regularized %%%%%%%%%%%%%%%%%%%%%%%%%%
%%%%%%%%%%%%%%%%%%%%%%%%%%%%%%%%%%%%%%%%%%%%%%%%%%%%%%%%%%%%
\begin{algorithm}[!b]
\DontPrintSemicolon
\SetKwInOut{Input}{Input}\SetKwInOut{Output}{Output}
\Input{$\mu(x) \in \mathcal{P}(M)$,  $\{\nu_j,y_j\}$}
    $t = 0$. \;
    \Repeat{$\pi$ and $y$ {\normalfont converge.}}{
        $\pi^{(t+1)} \leftarrow $ Compute OT $\pi(\mu,\nu)$ with fixed $y^{(t)}$. \;
        $\tilde{y} \leftarrow $ Compute new centroid according to (\ref{eq:wm_y}).\;
        \Repeat{ $y^{(t+1)}$ {\normalfont converges.}}{
            $y^{(t+1)} \leftarrow $ Update centroid by optimizing (\ref{eq:reg_wm_simple}).\;
        }
        $t \leftarrow t+1$.  \;
    }
\KwRet{$\pi,y$}.\;
\caption{Regularized Wasserstein Means}
\label{alg:rwm}
\end{algorithm}

% \newpage

\section{Regularized Wasserstein Means} \label{sec:rwm}
In many problems of machine learning, the solution that comes purely from the perspective of the mapping cost may not serve the best to represent the connection between origins and their images, let alone overfitting. Regularization is a common technique to introduce desired properties in the solution. In the previous section, we talked about the Wasserstein means problem and its optimizers: OT $\pi(\nu,\mu)$, support $y$, and the measure $\nu(y)$. In this section, we detail our strategies to regularize $y$ along with several regularization terms that we propose to penalize the Wasserstein means cost. For simplicity, we fix the given $\nu(y)$ in the following arguments and only consider $\pi$ and $y$ in the \textit{regularized Wasserstein means} (RWM) problem.

We start with a general loss function:
\begin{equation} 
\label{eq:reg_wm}
\begin{split}
    \mathcal{L}(\pi,y)  &= \mathcal{L}_{ot}(\pi,y) + \lambda\mathcal{L}_{\text{reg}}(y), \\
    \ignore{\text{where}\ \ } \mathcal{L}_{ot}(\pi,y)  &= \int_{\Omega}\|y-x\|_2^2 d\mu(x),\ \text{where}\ y=\pi(x).
\end{split}
\end{equation}
\noindent We call the first term the \textit{OT loss} or data loss. Our goal here is to explore $\mathcal{L}_{\text{reg}}(y)$ and the use of it. Optimizing (\ref{eq:reg_wm}) can also follow the block coordinate descent method. First, we fix the mean and compute the OT. Unlike in Alg.~\ref{alg:wm} where we directly update the mean to the average of their correspondences, next, we regularize the mean to satisfy certain properties through local minimization on (\ref{eq:reg_wm}).

Minimizing the OT loss $\mathcal{L}_{ot}(\pi,y)$ w.r.t. $y$ can be simplified to minimizing the quadratic loss for each support, i.e. $\mathcal{L}_{\tilde{y}} = \sum_j \|y_j-\tilde{y}_j\|_2^2 $, since they are equivalent:
% \begin{equation}
% \begin{split}
%     \int_{\Omega} \|y-x\|_2^2 d\mu(x) = (y^2 - 2y \int_{\Omega}xd\mu(x) + C_1)  \\
%      = \|y - \int_{\Omega} x d\mu(x)\|_2^2 + C_2 = \|y - \tilde{y}\|_2^2 + C_2.
% \end{split}
% \end{equation}
\begin{equation}
\begin{split}
    \int_{S_j} \|y_j-x\|_2^2 d\mu(x) = (y_j^2 - 2y_j \int_{S_j}xd\mu(x) + C_1)  \\
     = \|y_j - \int_{S_j} x d\mu(x)\|_2^2 + C_2 = \|y_j - \tilde{y}_j\|_2^2 + C_2.
\end{split}
\end{equation}
\noindent $C_1, C_2$ are some constants. $\tilde{y}_j$ is from (\ref{eq:wm_y}) and $S_j$ is the set in which $x$ is mapped to $y_j$. It is defined by VOT as $S_j = \{x\ \in M |\langle y_j,x \rangle - h_j \geq \langle y_i,x \rangle - h_i\}, \forall i \neq j$, see Sec.~\ref{sec:p_vot}. Thus, we re-write (\ref{eq:reg_wm}) as 
\begin{equation}
\label{eq:reg_wm_simple}
    \mathcal{L}(\pi,y)  = \sum_j \|y_j-\tilde{y}_j\|_2^2 + \lambda\mathcal{L}_{\text{reg}}(y)
\end{equation}
Note, that $\mathcal{L}_{\text{reg}}$ undermines the metric properties of the Wasserstein distance and yet the distance is not our concern but the data term of the loss we designed for a broad range of applications. We provide the general algorithm to compute regularized Wasserstein means in Alg.~\ref{alg:rwm}. 

% For optimizing (\ref{eq:reg_wm_simple}), we borrow the Newton solver from~\cite{spii}.

Citing the convergence proof from~\cite{grippo2000convergence}, as long as we add a convex regularization term, because $\pi\colon x\rightarrow y$ is compact and convex, our 2-block coordinate descent-based algorithm indeed will converge. In the rest of this section, we discuss in detail several regularization terms based on class labels, geometric transformation, and length and curvature, all of which are convex.

% \newpage
%%%%%%%%%%%%%%%%%%%%% potential %%%%%%%%%%%%%%%%%%%%%%%%%%%%%
\subsection{Triplets Empowered by Class Labels}
\label{sec:rwm_label}
We begin with a fair assumption that samples of the same class reside closer to each other and samples that belong to different classes are relatively far away from each other. This behavior can be expressed by signed distances between samples. Given that, we propose to regularize the mean update process by adding a \textit{triplet} loss, promoting intra-class connection and discouraging inter-class connections.

The triplet loss was proposed in~\cite{schroff2015facenet}, inspired by~\cite{weinberger2009distance}. It targets the metric learning problem which is finding an embedding space where samples of the same desired property reside close to each other and vise versa. In triplets, samples are characterized into three types -- \textit{anchor}, \textit{positive}, and \textit{negative}, denoted as $y^a$, $y^p$, and $y^n$. The motivation is that the anchor is closer by a margin of $\alpha$ to a positive than it is to a negative:
% \begin{equation}\nonumber
%     \|y_i^a - y_i^p\|^2_2 + \alpha < \|y_i^a - y_i^n\|^2_2,
% \end{equation}
% \noindent Then, the triplet loss is
\begin{equation}\nonumber
    \mathcal{L}_{\text{reg}}(y) = \sum^{K}_{i}[\|y_j^a - y_j^p\|^2_2 - \|y_j^a - y_j^n\|^2_2 + \alpha]_{+}.
\end{equation}
\noindent The overall RWM loss w.r.t. $y$ (\ref{eq:reg_wm_simple}) becomes 
\begin{equation}
\label{eq:triplet_wm}
\begin{split}
        \mathcal{L}(y) & =  \sum_j \|y_j-\tilde{y}_j\|_2^2 + \lambda \mathcal{L}_{\text{triplet}}(y).
\end{split}
\end{equation}
% \noindent It is similar to the loss function in~\cite{weinberger2009distance}, which has the general form of
% \begin{equation}
% \label{eq:lnmm}
%     \mathcal{L} = (1-\lambda)\mathcal{L}_{\text{pull}} + \lambda\mathcal{L}_{\text{push}}. \nonumber
% \end{equation}

\label{sec:exp_gm}
\begin{figure}[!t]    
  \centering
    \includegraphics[width=\linewidth]{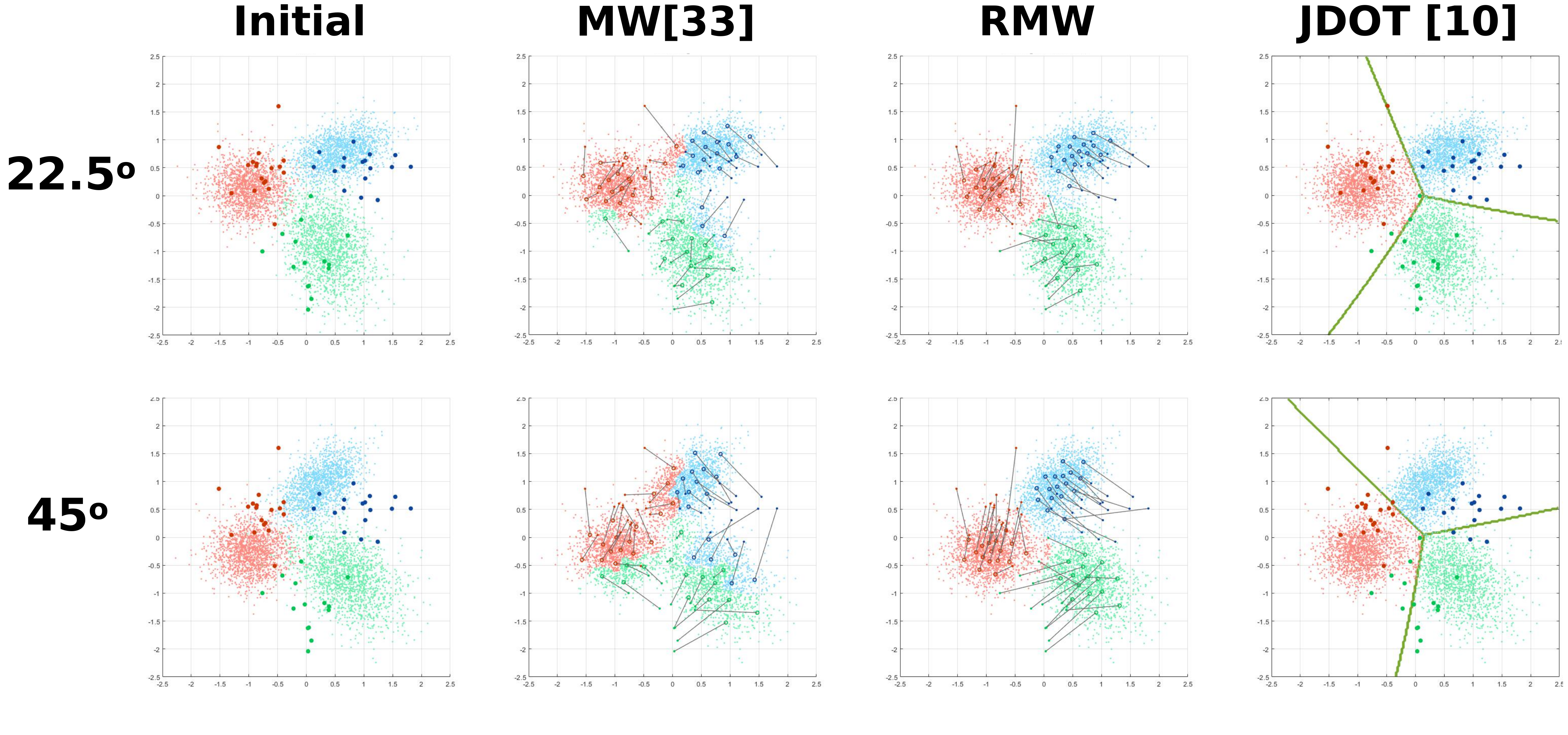}
    \caption{{\small Regularizing the WM by the intra-class triplets can adapt it to domains that suffer unknown rotations.}}
    \label{fig:gm}
\end{figure}

Fig.~\ref{fig:gm} shows an example of aligning Gaussian mixtures by (\ref{eq:triplet_wm}). Suppose a mixture has three components with different parameters, each belonging to a different class shown in three colors. We rotate the mixture by a certain degree to emulate an unknown shift and apply our method to recover the shift.

We sample the source domain 50 times and the target domain 5,000 times at $22.5^o$ and $45^o$. Fig.~\ref{fig:gm} 1$^{\text{st}}$ column shows the setups. The 2$^{\text{nd}}$ column shows the result from computing the WM without regularization as in~\cite{mi2018variational}. The 3$^{\text{rd}}$ column shows our result. Our method can well drive source samples into the correct target domain. The lighter colors on the target samples in the 2$^{\text{nd}}$ column indicate the predicted class by using the OT correspondence. Since our OT preserves the measure during the mapping, we can deterministically label each unknown sample by querying its own centroid's class. Note, that this is equivalent to the 1NN classification algorithm based on the \textit{power Euclidean distance}~\cite{mi2018variational}. Only when the weight of every centroid equals each other will the power distance coincide with the Euclidean distance. In the last column, we show the result from~\cite{courty2017joint}. It learns an RBF SVM classifier on the target samples.

\begin{figure}[!t]
  \centering
    \includegraphics[width=\linewidth]{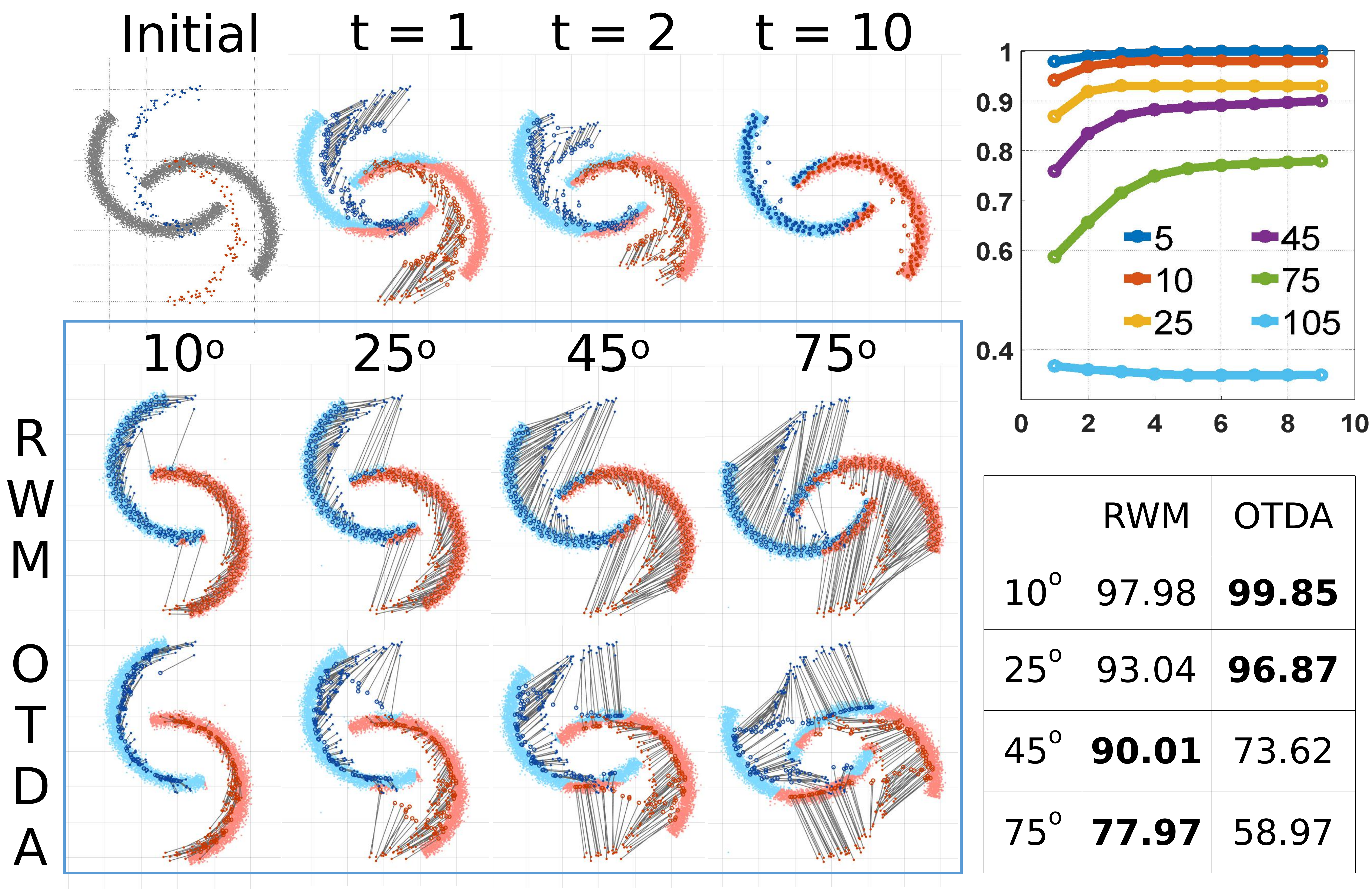}
    \caption{{\small RWM adapting shifted two moons: 1$^{\text{st}}$ row performance over iteration under 45$^o$; 2$^{\text{nd}}$ and 3$^{\text{rd}}$ rows performances of RWM and OTDA under different degrees.}}
     \label{fig:twomoon}
\end{figure}

\subsection{Geometric Transformations}
\label{sec:rwm_affine}
While OT recovers a transformation between two domains that induces the lowest cost, it does not consider the structure within the domains. Pre-assuming a type of the transformation and then estimating its parameters is one of the popular approaches to solving domain alignment-related problems, for example in~\cite{gopalan2011domain,courty2017optimal}. In this way, the structure of the domain can be preserved to some extent. Let us follow this trend and assume that two domains can be matched by a geometric transformation with modifications, that is, any transformation between domains is a combination of a parametric geometric transformation and an arbitrary transformation. This leads to our following strategy that we, on the one hand, regularize the mean to be roughly a geometric transformation in order to preserve the structure of the source domain during the mapping but on the other hand also allow OT to adjust the mapping so that it can recover irregular transformations.

We follow Alg.~\ref{alg:rwm}. First, compute OT to obtain the target mean positions $\tilde{y} = \pi(x)$ and then use the paired means $\{y,\tilde{y}\}$ to determine the parameters of a geometric transformation $\mathcal{T}$ subject to $\tilde{y} = \mathcal{T}y$ through a least squares estimate. Suppose $y_j^{\mathcal{T}} = \mathcal{T}y$ is the estimate purely based on the affine transformation, then, we have the RWM loss
\begin{equation}
\label{eq:reg_affine}
    \mathcal{L}(\pi,y)  = \sum_j \|y_j-\tilde{y}_j\|_2^2 + \lambda \sum_j \|y_j - y_j^{\mathcal{T}}\|_2^2.
\end{equation}
\noindent Candidates of the geometric transformations include but not limited to perspective, affine, and rigid transformations.

We demonstrate (\ref{eq:reg_affine}) with \textit{two moons} in Fig.~\ref{fig:twomoon}. The known domain contains $200$ samples in blue and red. The unknown domain is the known domain after a rotation, sampled $10,000$ times in grey. We assume the prior is a rigid transformation. The top row shows the result on the $45^o$ case after several iterations. In the end, RWM almost recovers the transformation with a small error. Top right shows accuracy over iterations under different degrees. The 2$^{\text{nd}}$ row shows the result under different degrees of rotation. We weight in OTDA-GL's result~\cite{courty2017optimal} in the 3$^{\text{rd}}$ row showing RWM's superiority over OTDA under large transformations and its inferiority under small transformations. We also notice that RWM maps the samples \textit{into} the domain which OTDA fails to.

% To show the impact of the sample weights, and also to humor the readers, we repeat the 45$^o$ and give the source samples different weights. In practice, the weight can be some uncertainty score that indicates samples' credibility or influence. Let us decrease the weights of the tails of the moons for some reason. The middle right plot shows a better result than the original one $94.72\%$ v.s. $90.01\%$, raising a hypothesis that RWM has the potential to reach better performances when we have sample uncertainty as a prior.

%%%%%%%%%%%%%%%%%%%% Length & Curvature %%%%%%%%%%%%%%%%%%%55
\subsection{Topology Represented by Length and Curvature}
\label{sec:rwm_curve}
The nature of many-to-one mapping in the WM problem enables itself to be suitable for skeleton layout. Consider a 3D thin, elongated point cloud. Our goal is to find a 3D curve consisting of sparse points to represent the shape of the cloud. The problem with directly using WM for skeleton layout is that the support is unstructured. Therefore, we propose to pre-define the topology of the curve and add the length and curvature to regularize its geometry, both intrinsically (length) and extrinsically (curvature).

% We compute the squared total curvature $\int_0^1\mathcal{K}^2(t)ds$ of a fitted to 3 adjacent centroids and assign the curvature to $y_{j}$. Calculation of $\mathcal{K}^2(t)$ and $ds$ follows [44] and is convex.

We give an order of the support so that they can form a piece-wise linear curve. For each three adjacent supports, $y_{j-1},y_{j},y_{j+1}$, we fit a quadratic spline curve $\gamma(t)$ of 100 points. Its length is approximated by summarizing the length segment $\int_0^{length} ds=\int_0^1\|\gamma'(t)\|dt$, and its curvature at the middle point $y_i$ can be approximated by the total curvature $\int_0^{length} \mathcal{K}^2(t) ds$, $\mathcal{K}(t)=\frac{\|\gamma'(t)\times\gamma''(t)\|}{\|\gamma'(t)\|^3}$ as in~\cite{ulen2015shortest}. Thus, the regularization on the length and curvature can express itself as follows:
\begin{equation}\label{eq:reg_length_curvature}
\begin{split}
   \lambda\mathcal{L}_{\text{reg}} =  \lambda_1\sum_{1\leq i < k}g(\gamma'(y_i)) + \lambda_2\sum_{1 < i < k}l(\gamma''(y_i)). 
\end{split}
\end{equation}
\noindent where $g(\cdot)$ and $l(\cdot)$ are some functions computed out of the length and curvature based on $y$, which are both convex making (\ref{eq:reg_length_curvature}) convex. We could go further and include torsion into the term but since we do not pursue a perfectly smooth curve but rather the reasonable embedding of the supports in the interior of the point cloud, we have passed torsion. 

In case the shape have branches, we can easily extend (\ref{eq:reg_length_curvature}) considering the skeleton as a whole when computing the OT and regularizing each branch separately. Suppose, now, the skeleton $\Gamma=\{\gamma_j\}$ is a set of 1-D curves. Finally, we propose the following loss for skeleton layout:
\begin{equation} \label{eq:reg_curve}
\begin{split}
   &\mathcal{L}(\pi,y) = \sum_j \|y_j-\tilde{y}_j\|_2^2 \\
     &+ \sum_{\gamma \in \Gamma}\big( \lambda_1\sum_{1\leq i < k}g(\gamma'(y_i)) + \lambda_2\sum_{1 < i < k}l(\gamma''(y_i)) \big).
\end{split}
\end{equation}

\bgroup
\def\arraystretch{1}
\begin{table}[b]
\centering
\caption{Classification Accuracy (\%) on Office-31 W $\rightarrow$ A } \label{tab:office}
\begin{tabular}{c|@{\ \ }c@{\ \ }c@{\ \ }c@{\ \ }c}
\hline
Feature   & 1NN          & WM           & OTDA         & RWM   \\
\hline
Decaf-fc6 & 30.2$\pm$1.3 & 32.7$\pm$2.3 & 33.9$\pm$2.1 & \textbf{36.4}$\pm$2.7\\
Decaf-fc7 & 31.3$\pm$1.9 & 34.6$\pm$2.2 & 35.8$\pm$1.5 & \textbf{43.2}$\pm$2.6\\
\end{tabular}
\end{table}
\egroup
% \noindent (\ref{eq:reg_curve}) belongs to a general framework for skeleton extraction discussed in \cite{huang2013l1}. 

%%%%%%%%%%%%%%%%%%%%%%%%%%%%%%%%%%%%%%%%%%%%%%%%%%%%%%%%%%%%
%%%%%%%%%%%%%%%%%%%%% Experiments %%%%%%%%%%%%%%%%%%%%%%%%
%%%%%%%%%%%%%%%%%%%%%%%%%%%%%%%%%%%%%%%%%%%%%%%%%%%%%%%%%%%%
\section{Applications}
\label{sec:exp}
We demonstrate the use of RWM in domain adaptation (class label), point set registration (geometric transformation), and skeleton layout (topology).

\subsection{Domain Adaptation}\label{sec:exp_da}

We evaluate our method on the office-31 dataset~\cite{saenko2010adapting}. Office-31 includes two subsets -- Amazon and Webcam. We adapt from Webcam to Amazon (W $\rightarrow$ A). The Amazon set contains 2,848 images from 31 categories. Each category has a different number of samples from 36 to 100. The Webcam set archives 826 images from the same 31 categories, each having between 11 to 43 samples. 

\begin{figure}[t]    
  \centering
    \includegraphics[width=\linewidth]{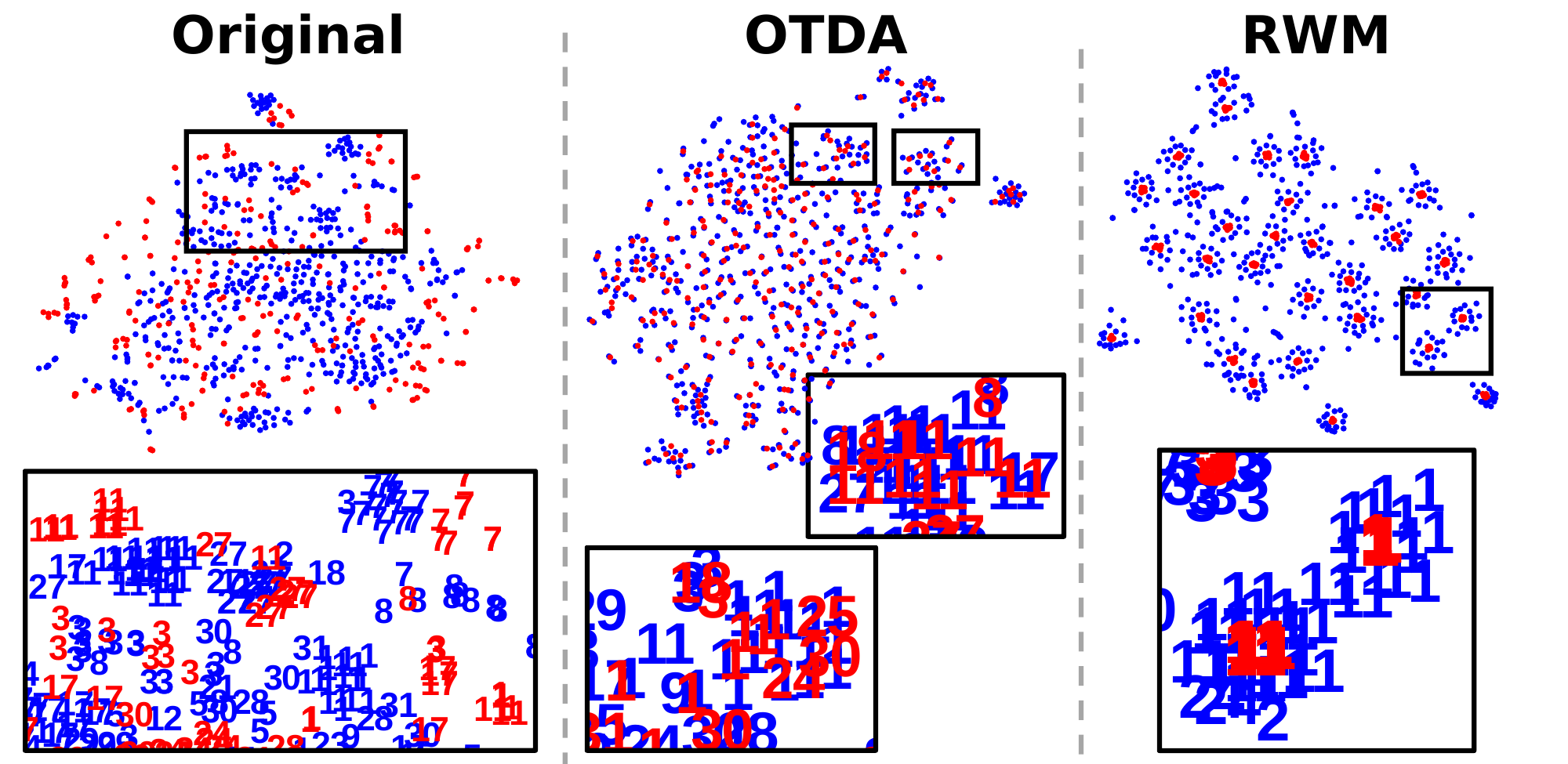}
    \caption{{\small t-SNE plots of Office samples after OTDA and RWM.}}
    \label{fig:office_tsne}
\end{figure}

We use the Decaf-fc6 and Decaf-fc7 features provided along with the dataset. Each sample now is encoded into a vector of 4,096 dimensions. The setup is similar to OTDA~\cite{courty2017optimal}. We randomly select 20 samples per class from Amazon and 10 samples per class from Webcam, because the `ruler' category of Webcam only has 11 samples and we want each class to have an equal number of samples. Then, we normalize the weight of the sample so that the total weight from Amazon and from Webcam are both one. Each sample is assumed to have an equal weight: Amazon sample $1/620$ and Webcam sample $1/310$.

% Reg weight is 0.3. OTDA reg weight is 0.03. 

We compare RWM with OTDA and also include 1NN and the original WM as baselines. The experiments are repeated 10 times and Tab.~\ref{tab:office} summarizes the averaged results. RWM outperforms other methods by a large margin. We also show the resulting t-SNE embeddings in Fig.~\ref{fig:office_tsne}. From left to right are the original embeddings, embeddings after OTDA, and embeddings after RWM. Blue dots represent Amazon samples and red dots Webcam samples. Numbers indicate classes. RWM successfully cluster samples from the same class into distinguishable clusters while OTDA on the other hand very well integrates the source domain into the target domain (but with larger errors). Zoom in the pictures to see the samples of 1, `bike', and 11, `keyboard'. The regularization weight of OTDA Laplacian is $0.3$. It is from a search in $\{1, 0.3, 0.1, 0.03, 0.01\}$. The weight of RWM is $1$ from a search in $\{3, 1, 0.3, 0.1, 0.03, 0.01\}$

% \begin{figure}[t]    
%   \centering
%     \includegraphics[width=\linewidth]{office31_samples.pdf}
%     \caption{{\small Sample images from the Office-31 dataset.}}
%     \label{fig:office_sample}
% \end{figure}

% Adapting from W to A is challenging for RWM because W has too few samples for each class and the target in this case should have even fewer samples. Adapting from for example 5 samples to 10 samples is not practical because 5 samples can hardly represent a domain. This is a limitation of Monge-based approaches. A workaround might be augmenting the target domain, creating more samples around the original samples but it may not be practical either in high-dimensional spaces. Papadakis proposed, in a recent work~\cite{papadakis2019approximation}, a new way of solving discrete optimal transportation by finding a few relays in between the source and the target samples, which may bring insights to this problem of adapting too few samples. We leave it to future work.

%%%%%%%%%%%%%%%%%%%%%%% per %%%%%%%%%%%%%%%%%%%%%%%
\subsection{Point Set Registration}\label{sec:exp_pcr}
Registering point sets is key to many downstream applications such as surface reconstruction and stereo matching. Point set registration algorithms aim to assign correspondences between two sets of points and to recover the transformation between them~\cite{myronenko2010point}. Figure~\ref{fig:icp} left shows a Stanford Bunny in a grey point set and its shifted version in a colored point set after a random noisy translation and a rotation. We apply (\ref{eq:reg_affine}) to recovering the transformation. With this example, we also test our algorithm under the extreme condition when we have the same number of empirical samples and centroids. Our algorithm RWM still produces a one-to-one map between the two point sets. The transformation then perfectly aligns them while the traditional iterative closest point (ICP) algorithm fails to recover the transformation. The reason is that ICP assigns the correspondence based on nearest neighbors while RWM uses OT which considers the point set as a whole when computing the correspondence. Note, that by pre-defining the regularization as a rigid transformation and adjusting its weight, we can perform both rigid and non-rigid registration. In the above example, the regularization weight is $\lambda = 10$. Our alignment technique might be further incorporated into e.g. \cite{yang2016go} for globally optimal alignment.

\begin{figure}[t]    
  \centering
    \includegraphics[width=0.95\linewidth]{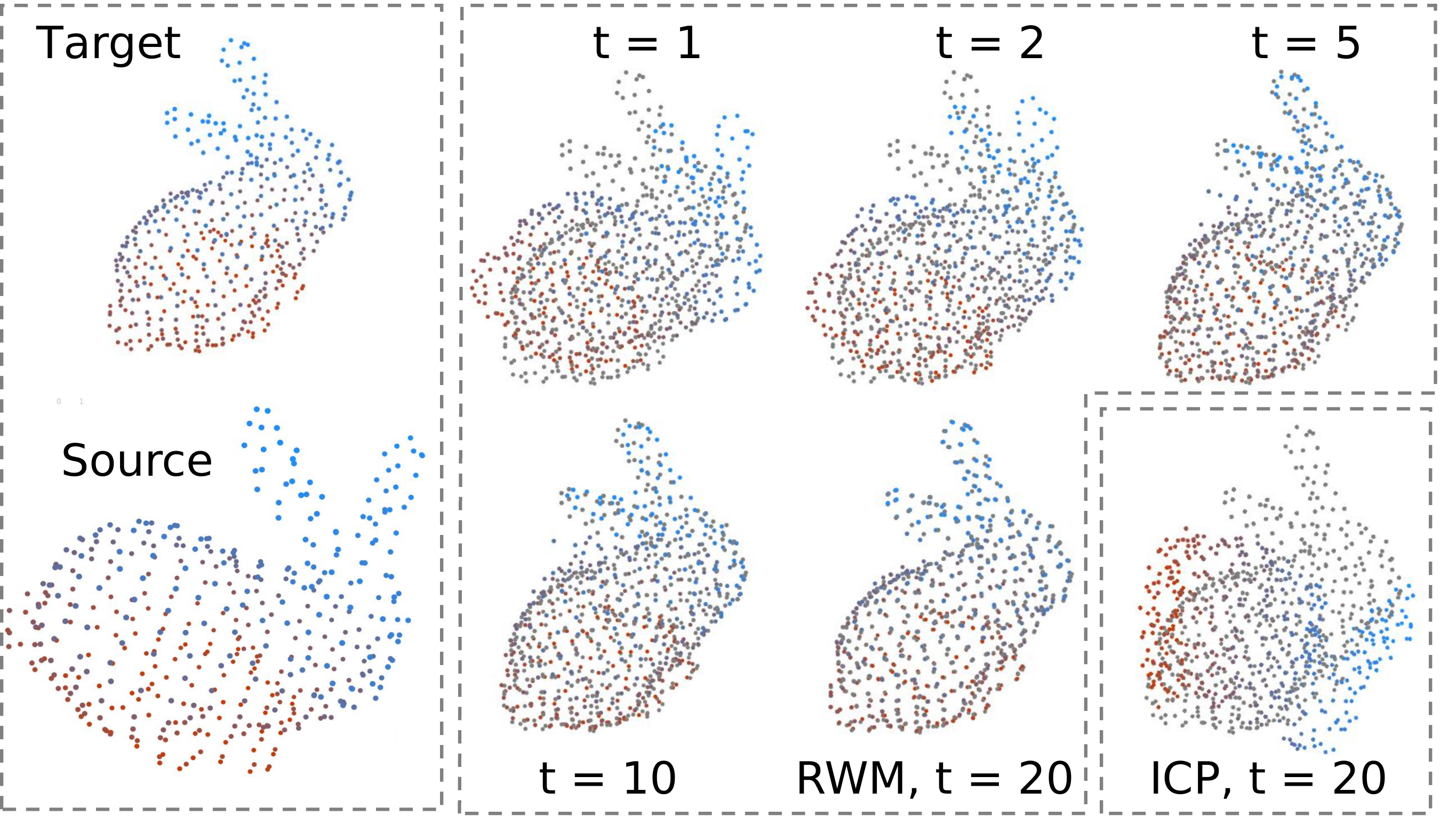}
    \caption{{\small Alignment of translationally and rotationally shifted bunnies after RWM and ICP. t indicates the number of iterations.}}
    \label{fig:icp}
\end{figure}

% \newpage
%%%%%%%%%%%%%%%%%%%%%%% skeleton %%%%%%%%%%%%%%%%%%%%%%%
\subsection{Skeleton Layout}\label{sec:e_sl}
\ignore{We follow a similar setup as in \cite{solomon2015convolutional}}. Suppose we have a point cloud $\mu \in \mathcal{P}(\mathbb{R}^3)$ and a graph $G=(V,E)$ representing the topology of the shape. Then, the problem is finding particular embeddings of the nodes $y(\nu): \nu \rightarrow \mathbb{R}^{3}$ that can relate the graph to the geometry of the point cloud. \ignore{On the one hand, we expect the skeleton to have (i) a small reconstruction error with the original cloud; on the other hand, the skeleton itself has to have (ii) a clear shape and (iii) certain smoothness. The three terms of (\ref{eq:reg_curve}) respectively satisfy these requirements.}

Now, consider the human shape point cloud in Fig.~\ref{fig:skel} top left. We initial a rough embedding of a graph by fixing its ends $V_0 \subset V$ to certain known positions $y_{\nu \in V_0}$ which are head, hands, and feet in this example, and set the rest of nodes evenly distribute along their branches. Our goal is to embed the nodes $\nu \in V \backslash V_0$ in this $\mathbb{R}^3$ space by applying (\ref{eq:reg_curve}). Because the weight of each centroid determines its boundaries with other centroids, it has to be adjusted to the local density of the cloud so that all the centroids could roughly evenly lay on the skeleton. Thus, we relax the restriction on weight and reinstate (\ref{eq:mw_nu}). We update the weight by momentum gradient descent, $\nu(y_j) ^{(t+1)} \leftarrow \lambda \nu(y_j) ^{(t)} + (1-\lambda) \int_{\Omega \cap S_j}d\mu(x)$ to prevent it from quickly trapped into a local minimum like k-means.

Top right of Fig.~\ref{fig:skel} shows our result. The skeleton successfully captures the shape of the point cloud. Colors of the skeleton nodes based on their position in the graph are transferred to the surface according to their OT correspondences. We compare the result from Lloyd's k-means algorithm and with RW in the 2$^{nd}$ and 3$^{rd}$ columns. Equal weight of regularization is added to Lloyd's algorithm to make it a fair comparison. We also test our method in an extreme initial condition. As shown in (b), our algorithm eventually recovers a coherent, correct shape, but without the regularization we could end up with ``ill-posed'' embeddings. The figure also writes the mean square errors (MSE). Our method achieves small MSEs while maintaining the topology. In the bottom left, we show result from Stanford Armadillo. In the bottom right, we show the result from~\cite{solomon2015convolutional} as the ground truth. It regards the problem as a \textit{Wasserstein propagation} problem and adopted Wasserstein barycenter techniques to relate the samples of the cloud to the graph, which is much heavier. The average time of 5 trials by~\cite{solomon2015convolutional} was 1,200 seconds while ours took 15 seconds. CPU: Intel i5-7640x 4.0 GHz.

% One direct application is computing the thickness of the ventricle of the human brain. The thickness is well known to be correlated to Brain diseases such as Alzheimer's disease (AD)~\cite{chou2009mapping} and yet most previous works rely on manual pre-processing, e.g.~\cite{wang2010multivariate}, while our method requires minimum manual work which is defining fixed ends and graph intersections. We interpolate the skeleton with a cubic spline curve and for each surface vertex use the Euclidean distance between its nearest neighbor on the skeleton and itself as the thickness. Fig.~\ref{fig:skel} (c) plots the ventricles with color indicating the thickness. It shows two typical ventricles, respectively belonging to different clinical groups, AD and normal control. We can then analyze the statistical pattern of the thickness over a group of subjects to determine the correlation between the shape of the ventricle and diseases.

\begin{figure}[!t]
  \centering
    \includegraphics[width=0.95\linewidth]{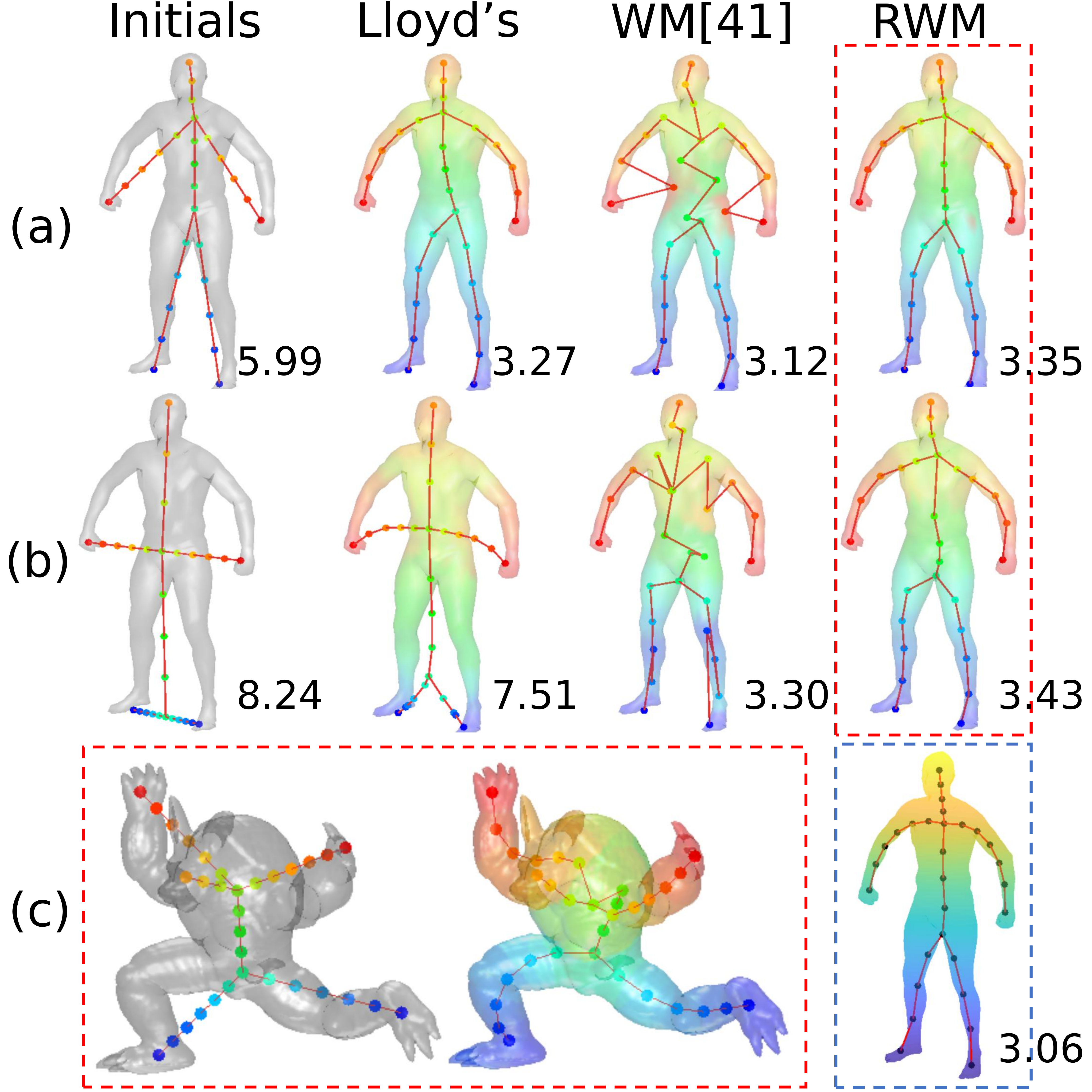}
    \caption{{\small Skeleton layout. RWM embeds a pre-defined graph which relates to the shape of the cloud. Numbers indicating MSE showing RWM balances between MSE and topology.}}
     \label{fig:skel}
\end{figure}

%%%%%%%%%%%%%%%%%%%%%%%%%%%%%%%%%%%%%%%%%%%%%%%%%%%%%%%%%%%%
%%%%%%%%%%%%%%%%%%%%% Discussion %%%%%%%%%%%%%%%%%%%%%%%%
%%%%%%%%%%%%%%%%%%%%%%%%%%%%%%%%%%%%%%%%%%%%%%%%%%%%%%%%%%%%
% \newpage
\section{Conclusion}
\label{sec:discuss}
We have talked about the Wasserstein means problem and our method to regularize it. The results have shown that our method can well adapt to different problems by adopting different regularization terms. This work opens up a new perspective to look at the Wasserstein means problem, or the k-means problem, as well as regularizing them.
% \newpage 

We expect further use of regularized optimal transportation techniques on aligning distributions in high-dimensional spaces. Future work in our line of research could also include regularizing the barycenters.

\subsubsection{Acknowledgements} 
This work was supported in part by NIH (RF1AG051710 and R01EB025032). Liang Mi is supported in part by ASU Completion Fellowship.

% \newpage
\bibliographystyle{aaai}
\bibliography{main}
\end{document}